\documentclass{llncs}
\usepackage{amssymb,color,amsmath,hyperref,mathtools}
\usepackage{amssymb,color,amsmath,mathtools}
\usepackage{algorithmic}




\newlength\problemlength
\settowidth{\problemlength}{\sl Question: }
\newcommand\dproblem[3]{%
\begin{list}{}{\labelwidth\problemlength \labelsep.7em \rightmargin1.5em
\leftmargin\problemlength \advance\leftmargin by3em
\parsep0ex \itemsep.2ex plus.1ex}
\item[{\sl Problem :\hfill}] #1
\item[{\sl Instance :  \hfill}] #2
\item[{\sl Question : \hfill}] #3
\end{list}
}




\newcommand\revb[1]{\ast_{#1}} 
\newcommand\crevb[1]{\circledast_{#1}}



\newcommand\nmodels{\not\models}






\newcommand{\calL}{{\mathcal{L}}}

\newcommand{\V}{{\mathcal V}}

\usepackage{xspace}


\newcommand\mSigmap[1]{$\sigmap{}$}


\newcommand{\rsrg}{\mathrm{RSRG}}
\newcommand{\csrg}{\mathrm{CSRG}}

\newcommand{\ginsberg}{\mathrm{G}}
\newcommand{\widtio}{\mathrm{wid}}

\newcommand{\rsrw}{\mathrm{RSRW}}
\newcommand{\csrw}{\mathrm{CSRW}}
\newcommand{\csir}{\mathrm{CSIR}}


\newcommand\Wcal{\mathcal{W}}
\newcommand\Wcali{\Wcal_\subseteq}

\newcommand\Wcalc{\Wcal_\mathit{card}}

\newcommand\Wcalbl{\Wcal_\mathit{Bel}}
\newcommand\Vcalbl{\Vcal_\mathit{Bel}}
\newcommand\calW{\mathcal{W}}
\newcommand\Vcal{\mathcal{V}}

\newcommand\Foc{\mathcal{F}}
\newcommand\Foci{\Foc_\cap}

\newcommand\card[1]{|#1|}

\newcommand\ie{i.e.\xspace}
\newcommand\wrt{w.r.t.\xspace}

\title{On the use of evidence theory in  belief base revision 
}
\titlerunning{On the use of evidence theory in  belief base revision  }

\author{ Ra\"\i da~Ktari\inst{1,3} \and Mohamed Ayman Boujelben\inst{2,3}}
\institute{Institut Supérieur d'Informatique et de Multim\'edia, Universit\'e de Sfax,
Route de Tunis Km 10, Technopole de Sfax B.P 242-3021, Sfax, Tunisie.
\and Institut des Hautes Etudes Commerciales, Universit\'e de Sfax, Route Sidi Mansour Km 10 B.P 43-3061, Sfax, Tunisie.
\and Laboratoire OLID (LR19ES21), Institut Sup\'erieur de Gestion Industrielle, Universit\'e de Sfax,
Route de Tunis, Technopole de Sfax B.P 1164-3021, Sfax, Tunisie.
\texttt{raida.ktari@isims.usf.tn }
\texttt{ ayman.boujelben@ihecs.usf.tn}
}
\authorrunning{ R.~Ktari,   and M.A.~Boujelben}

\pagestyle{plain}

\begin{document}
\maketitle
\begin{abstract} 
This paper deals with belief base revision that is a form of belief change 
consisting of the incorporation of new facts into an agent's beliefs represented by a finite set of propositional formulas.
In the aim to guarantee more reliability and rationality for real applications while performing revision,
we propose the idea of credible belief base revision yielding to define two new formula-based
revision operators using the suitable tools offered by evidence theory.
These operators, uniformly presented in the same spirit of others in \cite{CreignouKP17},
stem from consistent subbases maximal with respect to credibility instead of set inclusion and cardinality. 
Moreover, in between these two extremes operators, evidence theory let us shed some light on a \textit{compromise} operator
avoiding losing initial beliefs to the maximum extent possible.
Its idea captures maximal consistent sets stemming from all possible intersections of maximal
consistent subbases. 
An illustration of all these operators and a comparison with others are inverstigated by examples.

\begin{keywords}
 belief base revision $\cdot$ evidence theory $\cdot$  credibility $\cdot$ rationality $\cdot$
 knowledge representation and reasoning.
\end{keywords}

\end{abstract}
\section{Introduction}
\label{sec:intro}
One of the important research topics in Artificial Intelligence  is dynamics
(or change) beliefs. In many applications, such as image processing, 
reliability expert opinions, robotics, radar detection and relational databases, 
intelligent agents face incomplete, uncertain and inaccurate information, and often need a revision
operation so as to manage their beliefs change in presence of a new and reliable information.
When this information contradicts the agent's current beliefs, the revision
then deals with remaining consistency in order to integrate the new information while
modifying the initial beliefs as little as possible.
During the last thirty years, the question of how to perform
revision gave rise to numerous works according to the representation of beliefs.

The first works on the revision of beliefs come from subjective probabilities,
mainly with the works of Richard Jeffrey \cite{Jeffrey83}. In this context, the beliefs of an agent are
represented by a measure of probability, and the beliefs revision is what Jeffrey
called \textit{probabilistic kinematics}. 
Shortly thereafter, many logical approaches to revision have been developed.
Among these approaches, there are the so-called \textit{syntactic approach} 
\cite{Gin86,Winslett89,Pap92,Han98c,BBPW10}
where a great importance was given to revision of finite belief bases, \ie finite propositional formulas sets.
This approach was formalized in terms of postulates ({\em  AGM postulates}) \cite{AGM85} and several operators 
have been proposed in the literature. Some of them are based on construction of maximum 
consistent subbases according to different criteria \cite{BCDLP93,Dekleer90,Leh95}.
Recently, Creignou and colleagues in \cite{CreignouKP17} focused particularly on two operators, namely RSRG and RSRW, 
that are respectively similar to Ginsberg's one \cite{Gin86} and Widtio \cite{Winslett89},
using set cardinality instead of set inclusion as maximality criterion. 

An important issue is to introduce efficient tools that fulfill the needs of others in their investigations.
Indeed, in many real applications, belief bases are quite large and choosing, as a revision result,
all maximal consistent subbases like Ginsberg's approach can be an expensive, exhaustive and even explosive solution.
On the one hand, keeping only beliefs that are not questioned (streeming from the intersection of maximal coherent subbases)
can be plausible strategy, but not always reliable and can cause, in most the cases, a lot of loss of information
since there is no guarantee to  not have the empty set as an intersection between maximal consistent subbases.
On the other hand, the selection of some maximal consistent subbases according to determined criteria,
can be shown also plausible, and the choice of set cardinality like as in \cite{CreignouKP17} seems rather reasonable,
because it respects the minimality change criterion of belief revision.

To deal with this issue and in order to be reliable and close
to the non-monotony of human reasoning mainly in the case of large belief bases, 
the selection of consistent subbases maximal with respect to set cardinality is not always a guarantee 
to select the most relevant information and can consequently neglect potential formulas from initial 
agent's beliefs (as shown in Example \ref{exp:exp_principal1}). 
This paper goes one step further in this context 
by investigating a more \textit{natural} criterion 
in the selection of consistent subbases respecting the minimality change criterion of belief revision
in the sense of \textit{credibility} while performing change
by capturing the most valuable or potential information from initial beliefs.
This can be accomplished by using different tools offered by evidence theory,
commonly also known as Dempster-Shafer or belief functions theory \cite{ay1,ay2}.


To the best of our knowledge, although belief revision in probability theory is fully studied, belief revision strategies in
evidence theory has not been addressed so far, except the paper \cite{MaLDP11} where authors have
taken revision  from a different angle representing agent's beliefs by mass functions in 
the aim to generalize Jeffrey’s rule from probability to belief functions.
In the present paper and from another point of view to revision,
we shed light on the use of evidence theory in the context of formula-based belief revision through two main contributions. 
Following the same spirit of the work \cite{CreignouKP17}, we propose at first two formula-based operators
similar to RSRG and RSRW, namely CSRG and CSRW, based on the selection of the most credible consistent subbases.
The computation of credibility is assured using the suitable tools offered by evidence theory. 
Then, we present a new revision strategy instantiated by a compromise operator CSIR between CSRG and CSRW.
This strategy captures the most credible consistent sets stemming from all the possible intersections
of maximal consistent subbases. It can supersede other strategies in many real applications for the reason 
that it avoids losing original beliefs as far as possible. A compact representation of all these operators is given
within the unified framework already developed in \cite{CreignouKP17}. 

This work is organized as follows: 
after a preliminary section (section \ref{sec:prel})
which introduces some preliminaries on propositional logic and evidence theory
and gives a reminder on formula-based (syntactic) revision operators we are interested in,
we formally define in Section \ref{sec:New}
our new belief operators stemming from consistent subbases maximal 
with respect to \textit{credibility} degree and we show their specificities and advantages comparing with others.
In section \ref{sec:NewRev} we provide a new (compromise) revision strategy based likewise on \textit{credibility} degree
capturing beliefs with a mind to be prudent.
Both contributions are illustrated with examples. 

\section{Preliminaries}
\label{sec:prel}
\subsection{Propositional Logic}
\label{subsec:propositional}
In this section, we assume familiarity with the basic notions of propositional (classical) logic.
So, we (very briefly) present the background and terminology used in this paper.
Let $\calL$ be the language of propositional logic built on an infinite countable set of variables (atoms)
 denoted by $\V$ and equipped with standard connectives $\neg$, $\wedge$, $\vee$, $\rightarrow$, 
 the exclusive or connective $\oplus$, and constants $\top$, $\bot$. 
We remind that a literal $a$ is an atom (positive literal) or the negation of an atom $\neg a$ (negative literal).
A clause is a disjunction of literals.
We say that a formula is in  CNF  if it is a conjunction of clauses. 
For a set $A$ of formulas, $Cn(A)$ denotes the closure of $A$ under the consequence relation $\models$. 
A theory $A$ is a deductively closed set of formulas if $A=Cn(A)$.
Let $B$ be a finite set of propositional formulas, $B=\{\varphi_1, \varphi_2,\ldots,\varphi_p\}$ 
is identified to $\bigwedge B$ the  conjunction of its formulas, $\varphi_1 \land \varphi_2 \land \ldots \land \varphi_n$.
Given a family of finite sets of formulas $\calW =\{B_1,\ldots, B_n\}$, we use 
$\displaystyle\bigvee_{i=1}^n \bigwedge B_i$ for 
$\displaystyle\bigvee_{i=1}^n \bigwedge _{\varphi\in B_i}\varphi$.

\subsection{Belief base revision}
\label{sec:rev}
In this paper, we focus on formula-based (syntactic) revision operators already presented within a unified framework 
in \cite{CreignouKP17}. Each operator, denoted by $\ast$, is a function that takes a belief base $B$ 
and a formula $\mu$ representing new information as input and returns a new belief base $B\ast\mu$. 
Many formula-based operators stem from  $\Wcal(B,\mu)$, the set of maximal 
subbases of $B$ consistent with  $\mu$. 
They then make use of this set to define the revised belief base according to a given strategy. 
The maximality criterion as well as the strategy can vary.
In the literature, maximality was first considered in terms of set inclusion, and thus the following set was considered
\begin{eqnarray*}
\Wcali(B, \mu) & = & \{ B'_i\subseteq B \mid \bigwedge B'_i\nmodels \neg \mu \hbox{ and for all } B'_j, j \neq i
   \\ \  &  & \hbox{ s. t. }  B'_i\subset B'_j \subseteq B, \bigwedge B'_j \models \neg \mu\}.
\end{eqnarray*} 

In \cite{CreignouKP17}, authors consider then two different strategies.
The first one is \textit{''permissive``} by considering that all maximal subbases are equally plausible
and the second one is more  \textit{''drastic``} and  stems from the intersection of consistent maximal subbases,
\ie it only keeps beliefs that are not questioned.
Thus, these two  strategies provide two well-known operators,  namely  Ginsberg's operator, 
$\revb{\ginsberg}$,  \cite{Gin86} and  Widtio operator, $\revb{\widtio}$, \cite{Winslett89} defined respectively as 

$$B \revb{\ginsberg}\mu= \bigvee_{B' \in \Wcali(B,\mu)}\bigwedge(B' \cup \{\mu\}) $$
 and 
 $$B \revb{\widtio} \mu= \bigwedge \bigcap_{B' \in \Wcali(B,\mu)} (B' \cup \{\mu\} ).$$
Authors in \cite{CreignouKP17} focus then on  maximality defined in terms of cardinality.
This is a quite natural issue since in various applications  the cardinality criterion 
is used because information acquisition is expensive.
So, they consider the set of consistent subbases maximal \wrt cardinality $\Wcalc(B,\mu)$ instead of $\Wcali(B, \mu)$.
Formally, we present  $\Wcalc(B,\mu)$ as
\begin{eqnarray*}
\Wcalc(B,\mu)  & = & \{ B'_i \subseteq B \mid \bigwedge B'_i\nmodels \neg \mu 
\hbox{ and for all } B'_j\subseteq B, j\neq i
\\ \  &  & \hbox{ s. t. }\card{B'_i} < \card{B'_j}, \bigwedge B'_j\models \neg \mu\}.
\end{eqnarray*} 

Analogously and respectively to Ginsberg's and Widtio operators, the two  strategies presented above provide two operators 
$\rsrg$ \cite{BBPW10} and $\rsrw$. Indeed, the notation RSR comes from the expression \textit{``Removed Sets Revision''} 
qualifying operators stemming from the removal of the smallest number of formulas 
from the initial belief base \cite{BBPW10}. Formally, we have 

$$B \revb{\rsrg}\mu= \bigvee_{B' \in \Wcalc(B,\mu)}\bigwedge(B' \cup \{\mu\})$$
 and 
$$B \revb{\rsrw} \mu= \bigwedge \bigcap_{B' \in \Wcalc(B,\mu)} B' \cup \{\mu\}.$$

%
%
%
%
%

Moreover, note that these formula-based operators 
are sensitive to the syntactic form of the knowledge representation. 
The following example illustrates this idea.
\begin{example}
Consider $B_1=\{a, b\}$, $B_2=\{a, a\rightarrow b\}$ two belief bases
and a formula $\mu= \neg b$ representing the new information. The bases $B_1$ and $B_2$ are equivalents. 
The unique subset of $B_1$ which is consistent with $\mu$ is
$\{a\}$, while there are two maximal (in terms of set inclusion) subsets of $B_2$ which are consistent with $\mu$, namely, 
$\{a\}$ and $\{a \rightarrow b\}$. 
Consequently, $B_1 \revb{wid} \mu$ = $a\land \neg b$ and $B_2 \revb{wid} \mu$ = $\neg b$.
\end{example}

Therefore, in order to get rid of syntax dependency, 
Hansson \cite{Han98c} has shown that it seems natural to revise explicitly defined belief bases 
and to then extend these operations to belief sets, considering the deductive 
closure of the result of revision.
Otherwise, it is possible to define, from a revision operator $\ast$, a new one denoted by $\circledast$ whose
the result is a set of beliefs (or theory) such as 
$$ B \circledast \mu = Cn(B \ast \mu) .$$
We adopt this point of view in the present paper and we obtain correspondingly the following operators.

$$B \crevb {G} \mu = Cn(\bigvee_{{B'\in \Wcali(B,\mu)}} \bigwedge B'\cup\{\mu\})$$
$$B \crevb {wid} \mu = Cn(\bigwedge \bigcap_{B' \in \Wcali(B,\mu)} \{B' \cup \{\mu\} \})$$
$$B \crevb {\rsrg} \mu = Cn(\bigwedge \bigvee_{{B'\in \Wcalc(B,\mu)}} \bigwedge B'\cup\{\mu\})$$
$$B \crevb {\rsrw} \mu= Cn(\bigcap_{B' \in \Wcalc(B,\mu)} \{B' \cup \{\mu\} \})$$

\subsection{Evidence theory} 
\label{Evidenceth}
Evidence theory has been considered as a convenient
framework dealing with imperfect information. It was initially introduced by 
Arthur Dempster in 1967 \cite{ay1} and then formalized by Glenn Shafer in 1976 \cite{ay2} 
as a generalization of subjective probability theory.
It has been the starting point 
of several theoretical developments especially the transferable belief model \cite{ay3}. 
In addition, it has been applied in several fields such as artificial intelligence \cite{ayar1,ayar2}, 
clustering \cite{ayc1,ayc2}, multicriteria decision aid \cite{aymcda3,aymcda1,aymcda2}, etc.

Let $\Theta=\{S_1,...,S_n\}$ be a finite set of mutually exclusive and exhaustive statements
called frame of discernment and $2^\Theta$ be the power set of $\Theta$.
A Basic Belief Assignment (BBA) \cite{ay2} is the basic function used in evidence theory for
modeling imperfect data. 
It is a mapping $m$ defined from $2^\Theta$ to $[0,1]$ such as $m\{\emptyset\}=0$ and $\sum_{A\subseteq\Theta} m(A)=1$. 
The quantity $m(A)$ represents the belief mass of subset $A$, i.e., the belief committed exactly to $A$. 
When $m(A)\neq 0$, $A$ is called a focal element or a focal set. 

The function $m$ constitutes a flexible tool in evidence theory that models every state of belief.
A BBA is said to be \textit{Bayesian} if all its focal elements are singletons and \textit{consonant} 
if all these elements are nested. It is called \textit{vacuous} if the total belief is assigned 
only to $\Theta$ (total ignorance case) and \textit{simple} 
if it has two focal elements and $\Theta$ is one of these focal sets. 
In the latter case, $m(\Theta)$ reflects an ignorance level since it is the belief mass which 
is not assigned to any subset $A\neq\Theta$ and transferred to $\Theta$.

A BBA can be also represented by two functions called credibility (or belief) 
and plausibility, denoted in the literature respectively by $Bel$ and $Pl$ \cite{ay2}. 
Formally, these two functions are defined from $2^\Theta$ to $[0,1]$ as follows: 
$$Bel(A)=\sum_{\substack{X\subseteq A \\ X \neq\emptyset}} m(X)$$

$$Pl(A)=\sum_{A\cap X\neq\emptyset}m(X) $$

$Bel(A)$ is the total belief of subsets $X$ which are included in $A$ whereas $Pl(A)$ 
is the total belief of subsets $X$ having  a non-empty intersection with $A$, i.e.,
the subsets that are included in $A$ and those having a partial intersection with $A$. 
$Bel(A)$ and $Pl(A)$ are therefore the minimal and maximal total beliefs committed to $A$. 
They are also connected by the relation $Pl(A)=1-Bel(\overline{A})$ where $\overline{A}$ 
is the complement of $A$ in $\Theta$.

The combination is a fundamental notion in evidence theory allowing the aggregation 
of imperfect information given by several sources and modeled by BBAs. 
Several combination rules have been developed in this context \cite{ay4,ay5,ay6}.
Among them, Dempster's rule \cite{ay2} remains the most commonly-used operator 
in the combination of independent BBAs. It is given by 
$$m(A)=(1-k)^{-1}.\sum_{X\cap Y=A}m_1(X).m_2(Y),$$
where $m=m_1\oplus m_2$ is the BBA deduced from the combination of $m_1$ and $m_2$ 
(called orthogonal sum) and $k=\sum_{X\cap Y=\emptyset}m_1(X).m_2(Y)$ 
is the belief mass that the combination assigns to the empty set. 
The ratio $(1-k)^{-1}$ is a normalization factor guarantying that 
no belief mass is given to the empty set and that the total belief is equal to one.  

Dempster's rule is a conjunctive operator, i.e., the resulting focal
elements are intersections of those related to $m_1$ and $m_2$. 
It can be proved to be both commutative and associative.
Thus, the combination result of several BBAs is independent of the order in which they are considered. 

The decision-making is also an important notion of evidence theory that aims to choose 
the "best" statement of $\Theta$. Among other rules, one can cite the maximum of credibility rule 
that selects the most credible $S_i$ \cite{ay7}, the maximum of plausibility rule that chooses
the most plausible $S_i$ \cite{ay7}, 
and the maximum of pignistic probability \cite{ay8}. The latter operator is based on the idea of transforming a BBA 
into a function having similar properties of a probability distribution called pignistic probability 
function $BetP$. 
The decision is therefore to choose the statement having the maximum of pignistic probability.

\section{Credible belief base revision} 
\label{sec:New}
In this section, we investigate the idea of belief base revision considering maximality in terms of set credibility
(instead of set inclusion and set cardinality) denoted throughout this paper by CSR (Credible Sets Revision). 
Recall that our goal is to define belief base revision operators requiring \textit{rationality}
when revising so as to avoid losing valuable beliefs.
\subsection{Credible belief operators} 
\label{subsec:NewCred}
As described hereafter, the \textit{credible belief base revision} leads to define two new formula-based revision operators 
using the suitable tools offered by evidence theory.
To ensure uniformity with the RSRG and RSRW operators, we denote these operators by 
CSRG (referring to the \textit{permissive} strategy) and CSRW (referring to the \textit{drastic} strategy)
that stem from $\Wcalbl(B,\mu)$, the set of consistent subbases maximal w.r.t. credibility. Formally, we have

\begin{eqnarray*}
\Wcalbl(B,\mu)  & = & \{ B'_i \subseteq B \mid \bigwedge B'_i\nmodels \neg \mu   \hbox{ and for all  } 
B'_j\subseteq B, j\neq i 
\\ \  &  & \hbox{ s. t. } Bel(B'_i) < Bel(B'_j), \bigwedge B'_j\models \neg \mu\}.
\end{eqnarray*}

Let us turn to explain the computation of the credibility degree of each maximal consistent subbase $Bel(B'_i)$.
As presented below, the CSRG and CSRW operators work in three major steps: the definition of BBAs,
the combination and the decision-making.

Indeed, starting from $\Wcal(B, \mu)$ (i.e., the set of maximal consistent subbases $B'_i$ with $1\leq i \leq n$), 
the first step consists in representing each $B'_i$ by a simple BBA denoted $m_i$. 
This function takes into account the cardinality of $B'_i$ in order to reflect its importance 
with regard to the other subbases and compared to the initial agent's belief base $B$. 
Formally, this BBA is given for each $B'_i$ as follows:
\[
      \left\{
      \begin{array}{l}
      m_i(B'_i)=\frac{|B'_i|}{|B|} \\
      m_i(B)=1-\frac{|B'_i|}{|B|} \\
      \end{array}
      \right.
      \]
where $|B'_i|$ is the cardinality of $B'_i$ and $|B|$ is the cardinality of $B$. 
As one can remark, $m_i(B'_i)$ represents the proportion of formulas belonging to $B'_i$ with regard to $B$.
Note also that $m_i(B)$ is interpreted as an ignorance level that reflects the belief mass which is not assigned to $B'_i$ 
and therefore transferred to $B$. 

In the second step, the BBAs describing the maximal coherent subbases are combined using Dempster's rule. 
The combined BBA is defined as the orthogonal sum of these BBAs. It is given formally by :
\begin{eqnarray*}
m = m_1\oplus...\oplus m_n
\end{eqnarray*}

Since Dempster's rule is a conjunctive operator and the focal elements $m_i$ of each BBA  are $B'_i$ and $B$,
the focal sets $\Foc(B,\mu)$ of $m$ (the combined BBA)  are therefore all the subbases $B'_i$, 
all the sets derived from their possible combinations (non-empty intersections)
denoted by $\Foci(B,\mu)$ and the initial base $B$. 
This is due to the fact that $B$ is a common focal element defined in each $m_i$. 
Thus, it is clear that $B$ plays a central role in the combination since it allows
appearing all the $B'_i$ and their potential intersections. 
The sets $\Foc(B,\mu)$ and $\Foci(B,\mu)$ are formally defined respectively as follows:
\begin{eqnarray*}
\Foc(B,\mu)=  \Wcal(B, \mu) \cup \Foci(B,\mu) \cup \{B\}
\end{eqnarray*}
\begin{eqnarray*}
\Foci(B,\mu) &=& \{\bigcap_{i,j \in \{1,..n\}} (B'_i,B'_j) \cup \bigcap_{i,j,k \in \{1,..n\}} (B'_i,B'_j,B'_k)
\\ \  &  &  \cup ...  \cup  \bigcap(B'_1,...,B'_n)\} \setminus \{\emptyset\}
\end{eqnarray*}

At this point, let us note that the combination allows deducing intersections of subbases with belief masses.
If an intersection (or several) supports completely a subbase $B'_i$, 
it is therefore a focal set affirming $B'_i$. 
Thus, its belief mass can be added to $m(B'_i)$ which allows obtaining an overall 
degree characterizing $B'_i$. This measure is nothing else than its credibility degree $Bel(B'_i)$.
Formally, we have
\begin{eqnarray*}
Bel(B'_i)=m(B'_i)+\sum_{\substack{X\subset B'_i \\ X \in \Foci(B,\mu)}} m(X).
\end{eqnarray*}


\medskip
In the last step, the credibility degrees of all the $B'_i$ are exploited for the decision.  
Remind that $\Wcalbl(B,\mu)$ will contain the most credible consistent subbases among all the $B'_i$ of $\Wcal(B,\mu)$.
Hence, the CSRG operator $\revb{\csrg}$ takes into account all the subbases in $\Wcalbl(B,\mu)$ 
considering them equally fair and favorable. This operator can be captured as follows:
\begin{eqnarray*}
B \revb{\csrg}\mu= \bigvee_{B^{'*} \in \Wcalbl(B,\mu)}\bigwedge(B^{'*} \cup \{\mu\})
\end{eqnarray*}

As for the CSRW operator $\revb{\csrw}$, it stems from the intersection of the most credible consistent subbases.
The CSRW operator $\revb{\csrw}$ can be defined as
\begin{eqnarray*}
B \revb{\csrw} \mu= \bigwedge \bigcap_{B^{'*} \in \Wcalbl(B,\mu)} B^{'*} \cup \{\mu\}.
\end{eqnarray*}

Consequently, the associated operators  $\crevb {\csrg}$ $\crevb {\csrw}$ (returning a theory) 
respectively to $\revb{\csrg}$ and $\revb{\csrw}$ are the following:

\begin{eqnarray*}
B \crevb {\csrg} \mu = Cn(\bigwedge \bigvee_{{B^{'*}\in \Wcalbl(B,\mu)}} \bigwedge B^{'*}\cup\{\mu\})
\end{eqnarray*}

\begin{eqnarray*}
B \crevb {\csrw} \mu= Cn(\bigcap_{B^{'*} \in \Wcalbl(B,\mu)} \{B^{'*} \cup \{\mu\} \})
\end{eqnarray*}

We can now define the notion of logical consequence in each of these formalisms.
A formula $\psi$ is a logical consequence of the revision result $B \revb{} \mu$ if:

\begin{itemize}
 \item In the case of CSRG operator, it is a logical consequence of  each subbase $B'_i$ in $\Wcalbl(B, \mu)$ 
 augmented with the new information $\mu$.
Formally, we have

\medskip

\begin{center}
$B\revb{\csrg}\mu \models \psi$ if and only if for each $B'_i \in \Wcalbl(B, \mu), B'_i \cup \{\mu\} \models \psi.$
\end{center} 

\medskip

\item In the case of CSRW operator, it is a logical consequence of the intersection 
of all subbases in $\Wcalbl(B, \mu)$  augmented with the new information $\mu$.
Formally, we have 

\medskip

\begin{center}
$B\revb{\csrw}\mu \models \psi$ if and only if $(\bigwedge \bigcap_{B'_i \in \Wcalbl(B,\mu)}  B'_i) \cup \{\mu\} \models \psi$
\end{center}
\medskip

\end{itemize}

As far, we can conclude that an interpretation $I$ is a model of the revised belief base ($I \models B\revb{}\mu$) 
if and only if $I$ satisfies $\mu$ and satisfies at least one set of $\Wcalbl(B, \mu)$ in the case of CSRG operator, 
and every formula occurring in all maximal consistent subbases (\ie, in all sets of $\Wcalbl(B, \mu)$)
in the case of CSRW operator.

\bigskip

Finally, it is interesting to note that the maximum of credibility is generally 
used within evidence theory to select the most credible element of the frame of discernment.
In this work, we have adapted the use of this rule according to the studied context since the objective
is not to select one formula from the initial set $B$ \footnote{$B$ constitutes the frame of discernment 
and the formulas are the statements.}.
The decision should rather be taken on the different subbases 
(which are subsets of formulas) not on the formulas composing $B$. 
In addition, it is important to notice that we have not hope
to investigate the maximum of plausibility in the context of CSRG and CSRW operators 
since the plausibility considers even the focal elements having a partial intersection with $B'_i$. 
These focal elements are not the results of combining $B'_i$ with other subbases. 
They are rather induced by the combination of other sets.
Similarly, the maximum of pignistic probability 
(basically used to select the most likelihood element of the frame of discernment)
is inappropriate in our context. Indeed, the pignistic transformation cannot be exploited correctly 
in this case since the objective is to select a maximal consistent subbase not a unique formula of $B$.   

\subsection{Illustration of CSRG and CSRW operators} 
In what follows, we provide a simple example illustrating the CSRG and the CSRW operators.

\begin{example}\label{exp:exp_principal1}  
Consider the belief base  $B=\{ a \rightarrow \neg b, \neg b \rightarrow c, a \rightarrow d, a \rightarrow \neg c,
a \rightarrow \neg d, b, a \rightarrow  b, a \rightarrow  e, \neg b \rightarrow e\}$,
and a new information $\mu=  a \wedge (b \longleftrightarrow e)$. So, we have

\begin{eqnarray*}
  \Wcal(B,\mu)& = & \{B'_1=\{ a \rightarrow \neg b, \neg b \rightarrow c, a \rightarrow d\};
 \\ \  &  & B'_2=\{ a \rightarrow \neg b, a \rightarrow \neg c, a \rightarrow d\};
  \\ \  &  & B'_3=\{ a \rightarrow \neg b, a \rightarrow \neg d\};
 \\ \  &  & B'_4=\{b, a \rightarrow b,  a \rightarrow e, \neg b \rightarrow e\}\}.
\end{eqnarray*}

It is clear that 
$\Wcalc(B,\mu) = \{ B'_4=\{b, a \rightarrow b,  a \rightarrow e, \neg b \rightarrow e\}\}.$
Therefore,
$$B \revb{\rsrg}\mu=b \wedge (a \rightarrow b) \wedge (a \rightarrow e) \wedge (\neg b \rightarrow e)
\wedge (a \wedge (b \longleftrightarrow e))=\perp.$$

As $\Wcalc(B,\mu)$ contains a unique set (\ie $B'_4$), we have
$$\bigcap_{B' \in \Wcalc(B,\mu)} B'=\{\{b, a \rightarrow b,  a \rightarrow e, \neg b \rightarrow e\}\}.$$

Therefore, $B \revb{\rsrw}\mu=B \revb{\rsrg}\mu=\perp.$
Consequently, we obtain

$$B \crevb {\rsrg} \mu = B \crevb {\rsrw} \mu = Cn(\perp).$$
\medskip

Let us now consider the CSRG and CSRW operators. 
Thus, we determine at first the BBA related to each subbase $B'_i$ ($1\leq i \leq 4$) as follows: 
\begin{equation*}   
      \left\{
      \begin{array}{l}
      m_1(B'_1)=3/9 \\
      m_1(B)=6/9 \\
      \end{array}
      \right.
\end{equation*}
\begin{equation*}   
      \left\{
      \begin{array}{l}
      m_2(B'_2)=3/9 \\
      m_2(B)=6/9 \\
      \end{array}
      \right.
\end{equation*}
\begin{equation*}   
      \left\{
      \begin{array}{l}
      m_3(B'_3)=2/9 \\
      m_3(B)=7/9 \\
      \end{array}
      \right.
\end{equation*}
\begin{equation*}   
      \left\{
      \begin{array}{l}
      m_4(B'_4)=4/9 \\
      m_4(B)=5/9 \\
      \end{array}
      \right.
\end{equation*}
Dempster's rule is then applied to combine $m_1$, $m_2$, $m_3$ and $m_4$. 
This leads to the following BBA:   
\begin{equation*}   
      \left\{
      \begin{array}{l}
      m(B'_1)=0.1354 \\
      m(B'_2)=0.1354 \\
      m(B'_3)=0.0774 \\
      m(B'_4)=0.2166\\
      m(\{ a \rightarrow \neg b, a \rightarrow d\})=  m(B'_1 \cap B'_2)= 0.0677 \\
      m(\{ a \rightarrow \neg b\})=  m(B'_1 \cap B'_3) +  m(B'_2 \cap B'_3) 
       + m(B'_1 \cap B'_2\cap B'_3)
      \\ \quad \quad \quad \quad \quad \quad = 0.0967 \\
      m(B)= 0.2708\\
      \end{array}
      \right.
\end{equation*}


Based on the combined BBA, we compute the credibility degree $Bel$ for each subbase $B'_i$. 
Since Dempster's rule allows appearing the focal element $\{  a \rightarrow \neg b, a \rightarrow d\}$ 
which is  common  between $B'_1$ and $B'_2$, $m(\{  a \rightarrow \neg b, a \rightarrow d\})$
should be considered in the computation of the credibility degrees $Bel(B'_1)$ 
and $Bel(B'_2)$. Similarly, since the focal element $\{a \rightarrow \neg b\}$ 
is common  between $B'_1$, $B'_2$ and $B'_3$, $m(\{ a \rightarrow \neg b\})$
should be also added to the credibility degrees $Bel(B'_1)$, $Bel(B'_2)$ and $Bel(B'_3)$.
As a result, we obtain the following values:
\begin{equation*}   
      \left\{
      \begin{array}{l}
      Bel(B'_1)=m(B'_1)+m(\{  a \rightarrow \neg b, a \rightarrow d\})+m(\{ a \rightarrow \neg b\})
       \\ \quad \quad \quad \quad = 0.1354+0.0677+0.0967=0.2998\\
      Bel(B'_2)=m(B'_2)+m(\{ a \rightarrow \neg b, a \rightarrow d\})m(\{ a \rightarrow \neg b\})
       \\ \quad \quad \quad \quad =0.1354+0.0677+0.0967=0.2998 \\
      Bel(B'_3)=m(B'_3)+m(\{ a \rightarrow \neg b\})=0.0774+0.0967=0.1741 \\
      Bel(B'_4)=m(B'_4)=0.2166 \\
      \end{array}
      \right.
\end{equation*}

This yields to have
\begin{eqnarray*}
\Wcalbl(B,\mu) &=& \{ \{B'_1=\{ a \rightarrow \neg b, \neg b \rightarrow c, a \rightarrow d\};
 \\ \  &  & B'_2=\{ a \rightarrow \neg b, a \rightarrow \neg c, a \rightarrow d\}\}.
 \end{eqnarray*}

Observe that  $\Wcalbl(B,\mu) \neq \Wcalc(B,\mu)$
and a fortiori neither $B \revb{\csrg}\mu \neq B \revb{\rsrg}\mu$, nor $B \revb{\csrw}\mu \neq B \revb{\rsrw}\mu$.
As a consequence, 
\begin{eqnarray*}
B \revb{\csrg}\mu & = & (((a \rightarrow \neg b) \wedge (\neg b \rightarrow c) \wedge (a \rightarrow d))
\vee ((a \rightarrow \neg b)\wedge (a \rightarrow \neg c) 
\\ \ &  & \wedge (a \rightarrow d))) \wedge (a \wedge (b \longleftrightarrow e))
\\ \ & = & a \wedge \neg b \wedge d \wedge e.
%
%
\end{eqnarray*} 
Hence, 
$B \crevb {\csrg} \mu = Cn(a \wedge \neg b \wedge d \wedge e).$ 
Furthermore, we have

$$\bigcap_{B^{'*} \in \Wcalbl(B,\mu)} B^{'*}=\{\{a \rightarrow \neg b, a \rightarrow d\}\}.$$

Therefore,
$B \revb{\csrw}\mu=(a \rightarrow \neg b) \wedge (a \rightarrow d) 
\wedge (a \wedge (b \longleftrightarrow e))=a \wedge d \wedge e.$
Consequently, we obtain 
$B \crevb {\csrw} \mu = Cn(a \wedge d \wedge e).$

\medskip

\medskip

As one can remark, the subbase $B'_4$ was not considered 
in the belief revision outcome using the CSRG operator although 
it has the maximal cardinality with regard to $B'_1$ and $B'_2$.
This is due to the fact that it is less credible than the other subbases. 
This result can be explained by the lack of $(a \rightarrow \neg b)$ and $(a \rightarrow d)$ in $B'_4$ 
which are pertinent formulas in $B$ giving a considerable advantage 
to $B'_1$ and $B'_2$ and involving a loss of credibility of $B'_4$ in favor of these subsets. 
\end{example}

This example illustrates properly the fact that the selection of consistent subbases maximal
with respect to set cardinality is not always a guarantee to select the most 
relevant information since it can neglect potential formulas from the initial 
agent's beliefs and thus it can induce a loss of rationality when revising. 
Additionally, the use of credibility in belief base revision seems reasonable 
complying with the minimality (not in terms of quantity) criterion of belief revision 
through capturing relevant information playing a central role in the initial agent's beliefs.
Hence, this credibility criterion can be interpreted as a way to
guard as possible as far the general sense of the agent beliefs.
So, it would be interesting to study to which extend the credible revision takes the form of syntactic 
revision, implicitly with some semantic features.


%
%
%
\medskip

The example presented above can be a proper illustration of the gap between 
cardinality and credibility as  maximality criteria in the selection of consistent subbases.
But this cannot deny the fact that the CSRG and RSRG operators can lead to the same revision outcome in some cases.

\begin{proposition} \label{prop:prop1}
The CSRG operator leads to the same revision result as the RSRG operator 
if and only if the combined BBA verifies the following two conditions:
\begin{itemize}
	\item $\forall B'_i,B'_j\in \Wcalbl(B,\mu)$:
	\begin{equation*}   
      \left\{
      \begin{array}{l}
      m(B'_i)=m(B'_j) \\
      \sum_{\substack{X\subset B'_i \\ X \in \Foci(B,\mu)}} m(X) =
      \sum_{\substack{X\subset B'_j \\ X \in \Foci(B,\mu)}} m(X)\\
      \end{array}
      \right.
\end{equation*}
	\item $\forall B'_i\in \Wcalbl(B,\mu)$ and $\forall B'_j\notin \Wcalbl(B,\mu)$:
	\begin{equation*}   
      m(B'_i)-m(B'_j) > max\left(0, \sum_{\substack{X\subset B'_j \\ X 
      \in \Foci(B,\mu)}} m(X) - \sum_{\substack{X\subset B'_i \\ X \in \Foci(B,\mu)}} m(X)\right)
  \end{equation*}
\end{itemize}
\end{proposition}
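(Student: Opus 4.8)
The plan is to reduce the statement to a purely combinatorial characterization of when the two selected families of subbases coincide, i.e. when $\Wcalbl(B,\mu) = \Wcalc(B,\mu)$. Indeed, $\revb{\csrg}$ is built from $\Wcalbl(B,\mu)$ by exactly the recipe that builds $\revb{\rsrg}$ from $\Wcalc(B,\mu)$, so the two operators return the same revised base precisely when these two families agree; hence it suffices to prove that $\Wcalbl(B,\mu) = \Wcalc(B,\mu)$ if and only if conditions (1) and (2) hold.

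The key observation, which I would establish first, controls the combined mass $m(B'_i)$ of each maximal consistent subbase. Put $p_i = \card{B'_i}/\card{B}$, so that $m_i$ is the simple BBA with focal sets $B'_i$ (mass $p_i$) and $B$ (mass $1-p_i$). Since the members of $\Wcal(B,\mu)$ are pairwise incomparable under inclusion, any intersection $\bigcap_{j\in S} B'_j$ with $\card{S}\ge 2$ is a proper subset of every $B'_j$ with $j\in S$, and hence is never equal to any $B'_k$; likewise $B\cap B'_i = B'_i$. Therefore, in the orthogonal sum $m = m_1\oplus\cdots\oplus m_n$, the focal set $B'_i$ collects mass only from the single combination $S=\{i\}$, which gives $m(B'_i) = (1-k)^{-1}\, p_i\prod_{j\ne i}(1-p_j)$, where $k<1$ is the conflict mass ($k<1$ because $B$ is a focal set of every $m_i$). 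Hence $m(B'_i) = \frac{p_i}{1-p_i}\, m(B)$; since $t\mapsto t/(1-t)$ is strictly increasing on $[0,1)$ and $m(B)>0$ in the non-degenerate case $\emptyset\subsetneq B'_i\subsetneq B$, we get: for all $i,j$, $\card{B'_i}\le\card{B'_j}$ iff $m(B'_i)\le m(B'_j)$, with equality matching equality. In particular, ordering the $B'_i$ by cardinality is the same as ordering them by combined mass, so $\Wcalc(B,\mu)$ is exactly the set of $B'_i$ maximizing $m(B'_i)$.

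Now write $s_i = \sum_{X\subset B'_i,\, X\in\Foci(B,\mu)} m(X)$, so that $Bel(B'_i) = m(B'_i)+s_i$ and $\Wcalbl(B,\mu)$ is the set of $B'_i$ maximizing $m(B'_i)+s_i$. For the ``only if'' direction, assume $\Wcalbl(B,\mu) = \Wcalc(B,\mu) =: W$. Two members of $W$ have the same cardinality (as $W=\Wcalc(B,\mu)$), hence the same $m(B'_i)$ by the key observation, which is the first equation of (1); and they have the same $Bel$ (as $W=\Wcalbl(B,\mu)$), so subtracting gives the same $s_i$, the second equation of (1). For $B'_i\in W$ and $B'_j\notin W$: $B'_j\notin\Wcalc(B,\mu)$ forces $m(B'_i)>m(B'_j)$, while $B'_j\notin\Wcalbl(B,\mu)$ forces $m(B'_i)+s_i>m(B'_j)+s_j$, i.e. $m(B'_i)-m(B'_j)>s_j-s_i$; the two inequalities together are (2). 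For the ``if'' direction, assume (1) and (2). By (1) all $B'_i\in\Wcalbl(B,\mu)$ share one common combined mass $c$, and by (2) every $B'_j\notin\Wcalbl(B,\mu)$ satisfies $m(B'_j)<c$; hence $c=\max_i m(B'_i)$ and this maximum is attained exactly on $\Wcalbl(B,\mu)$. Since maximizing $m(B'_i)$ is the same as maximizing $\card{B'_i}$, that set is $\Wcalc(B,\mu)$, so $\Wcalbl(B,\mu)=\Wcalc(B,\mu)$ and the operators coincide.

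The step needing the most care is the mass computation in the key observation: one must use inclusion-incomparability of the $B'_i$ together with the fact that $B$ is a common focal element of all the $m_i$ to conclude that, after Dempster normalization, $m(B'_i)$ is a fixed strictly monotone function of $\card{B'_i}$. Everything afterwards is bookkeeping with the splitting $Bel(B'_i)=m(B'_i)+s_i$. One should also dispose of the degenerate cases ($B'_i=\emptyset$ or $B'_i=B$, and $\Wcal(B,\mu)$ a singleton) separately, where both sides of the equivalence hold trivially.
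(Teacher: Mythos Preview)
Your argument is correct and follows essentially the same route as the paper: both reduce the question to $\Wcalbl(B,\mu)=\Wcalc(B,\mu)$, establish that the combined masses $m(B'_i)$ are ordered exactly as the cardinalities $|B'_i|$, and then split $Bel(B'_i)=m(B'_i)+s_i$ to translate the equality of the two families into the stated conditions. Your version is in fact more careful than the paper's, since you derive the monotonicity of $m(B'_i)$ in $|B'_i|$ from an explicit formula (the paper merely asserts it) and you treat the ``if'' direction separately, whereas the paper handles it only implicitly via the $\Leftrightarrow$ signs in its chain of equivalences.
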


\begin{proof} 
As stressed above, the subbases $B'_i$ appear in the combined BBA thanks to the successive intersections with $B$.
Moreover, since $m_i(B'_i)$ is defined with respect to its cardinality $|B'_i|$ (\ie $m_i(B'_i)=\frac{|B'_i|}{|B|}$), 
the successive combinations of $B'_i$ with $B$ using Dempster's rule lead to obtain an order between 
the combined belief masses $m(B'_i)$ that respects the order between the cardinalities $|B'_i|$. As a result: 
\begin{equation*}   
      \left\{
      \begin{array}{l}
      |B'_i|=|B'_j|\Leftrightarrow m(B'_i)=m(B'_j) \\
      |B'_i|>|B'_j|\Leftrightarrow m(B'_i)>m(B'_j)\\
      \end{array}
      \right.
\end{equation*}

If the CSRG and RSRG operators reach the same revision result, this means that $\Wcalbl(B,\mu)=\Wcalc(B,\mu)$. Therefore:
\begin{itemize}
	\item $\forall B'_i,B'_j\in \Wcalbl(B,\mu)$:
	    \begin{equation*}   
      	\left\{
      	\begin{array}{l}
      	|B'_i|=|B'_j| \\
      	Bel(B'_i)=Bel(B'_j) \\ 
      	\end{array}
      	\right.
      \end{equation*}
      \begin{equation*}
       \Leftrightarrow 
      	\left\{
      	\begin{array}{l}
      	m(B'_i)=m(B'_j) \\
      	m(B'_i)+\sum_{\substack{X\subset B'_i \\ X \in \Foci(B,\mu)}} m(X) =  m(B'_j)+
      	\sum_{\substack{X\subset B'_j \\ X \in \Foci(B,\mu)}} m(X) \\
      	\end{array}
      	\right.
      \end{equation*} 
      \begin{equation*}
      \Leftrightarrow  
      	\left\{
      	\begin{array}{l}
      	m(B'_i)=m(B'_j) \\
      	\sum_{\substack{X\subset B'_i \\ X \in \Foci(B,\mu)}} m(X) = \sum_{\substack{X\subset B'_j 
      	\\ X \in \Foci(B,\mu)}} m(X)\\
      	\end{array}
      	\right.
      \end{equation*}
	\item $\forall B'_i\in \Wcalbl(B,\mu)$ and $\forall B'_j\notin \Wcalbl(B,\mu)$:
	    \begin{equation*} 
      	\left\{
      	\begin{array}{l}
      	|B'_i|>|B'_j| \\
      	Bel(B'_i)>Bel(B'_j) \\
      	\end{array}
      	\right.
      \end{equation*}
	    \begin{equation*} 
	    \Leftrightarrow
      	\left\{
      	\begin{array}{l}
      	m(B'_i)>m(B'_j) \\
      	m(B'_i)+\sum_{\substack{X\subset B'_i \\ X \in \Foci(B,\mu)}} m(X) >  m(B'_j)+
      	\sum_{\substack{X\subset B'_j \\ X \in \Foci(B,\mu)}} m(X) \\
      	\end{array}
      	\right.
      \end{equation*}
	    \begin{equation*}
	    \Leftrightarrow
      	\left\{
      	\begin{array}{l}
      	m(B'_i)-m(B'_j)>0 \\
      	m(B'_i)-m(B'_j)>\sum_{\substack{X\subset B'_j \\ X \in \Foci(B,\mu)}} m(X) - 
      	\sum_{\substack{X\subset B'_i \\ X \in \Foci(B,\mu)}} m(X)\\
      	\end{array}
      	\right.
      \end{equation*}
      \begin{equation*} 
      \Leftrightarrow  
      m(B'_i)-m(B'_j) > max\left(0, \sum_{\substack{X\subset B'_j \\ X \in \Foci(B,\mu)}} m(X) - 
      \sum_{\substack{X\subset B'_i \\ X \in \Foci(B,\mu)}} m(X)\right)
      \end{equation*}
\end{itemize}
\end{proof}

\begin{remark}
Let us emphasize that if the intersection of each pair of subbases is the empty set, 
the CSRG and RSRG operators lead to the same revision result. 
Indeed, if it is the case, the intersection of any other group of subbases is also the empty set. 
As a result, the focal elements set of the combined BBA consists only of the subbases (without intersections). 
This implies that there is no intersection supporting totally $B'_i$, i.e.:
\begin{equation*}
\sum_{\substack{X\subset B'_i \\ X \in \Foci(B,\mu)}} m(X)=0
\end{equation*} 
Therefore, $Bel(B'_i)=m(B'_i)$ and as a consequence, the revision result of the CSRG operator can be defined as follows: 
\begin{itemize}
	\item $\forall B'_i,B'_j\in \Wcalbl(B,\mu)$:
	\begin{equation*}   
      Bel(B'_i)=Bel(B'_j)\Leftrightarrow m(B'_i)=m(B'_j)
\end{equation*}
	\item $\forall B'_i\in \Wcalbl(B,\mu)$ and $\forall B'_j\notin \Wcalbl(B,\mu)$:
	\begin{equation*}   
      Bel(B'_i)>Bel(B'_j) \Leftrightarrow m(B'_i)>m(B'_j)\Leftrightarrow m(B'_i)- m(B'_j)>0
  \end{equation*}
\end{itemize}
These two conditions are nothing else than the conditions exposed in Proposition 1.
Hence, the CSRG and RSRG operators lead to the same revision result. 
\end{remark}


\section{Compromise revision strategy}
\label{sec:NewRev}
In addition to the CSRG and CSRW operators, we present hereunder another contribution 
that explores the idea of using evidence theory in belief base revision.
More specifically, we propose a new revision strategy based on this theory that captures beliefs with a mind to be prudent.

\subsection{Description of compromise strategy} 
As explained previously, we have already provoked two extreme approaches for revising belief bases. 
The former is permissive and allows choosing all the maximal consistent subbases whereas the latter
is drastic leading to keep only the beliefs that are not questioned. 
Between these two extremes, evidence theory let us shed some light on the idea of an intermediary 
or a \textit{compromise} strategy. The underlying idea of this approach is to capture the maximal 
consistent sets stemming from all the possible intersections of maximal consistent subbases. 
This constitutes obviously an advantage with regard to the drastic strategy which has been intensively
criticized in the literature since it is so prudent and can lead in many cases, 
and particularly with large belief bases, to lose all the initial belief's agents. 
In addition, it presents a benefit for the compromise strategy compared to 
the permissive approach which can lead to an exhaustive revision result especially in the domain of databases repair.

Considering the credibility as the most rational and reliable maximality criterion,
we focus on defining a compromise operator between the CSRG and CSRW ones.
The proposed operator, called CSIR (Credible Sets Intersections Revision), 
is also based on the idea of using the information given by the combined BBA since Dempster's rule 
allows deducing all the possible intersections between the subbases with their related belief masses. 
More precisely, the CSIR operator stems from $\Vcalbl(B,\mu)$, the set of the most credible focal
sets derived from all the possible intersections of maximal consistent subbases $B'_i$. 
In other words, $\Vcalbl(B,\mu)$ selects from $\Foci(B,\mu)$ the sets having the highest credibility degree. 
At this point, let us note that the maximum of credibility implies implicitly 
the satisfaction of maximal set inclusion criterion in $\Vcalbl(B,\mu)$. Formally, we obtain
\begin{eqnarray*}
\Vcalbl(B,\mu)  & = & \{ X'_i \subseteq \Foci(B,\mu) \mid \bigwedge X'_i\nmodels \neg \mu  
\hbox{ and for all  }  X'_j\subseteq \Foci(B,\mu), j\neq i 
\\ \  &  & \hbox{ s. t. } Bel(X'_i) < Bel(X'_j), \bigwedge X'_j\models \neg \mu\}.
\end{eqnarray*}

The operator $\revb{\csir}$ is therefore defined as follows:

$$B \revb{\csir}\mu= \bigvee_{X^{'*} \in \Vcalbl(B,\mu)}\bigwedge(X^{'*} \cup \{\mu\})$$

and the associated operator  $\crevb {\csrg}$ returning a theory is captured by

$$B \crevb {\csir} \mu = Cn(\bigwedge \bigvee_{{X^{'*}\in \Vcalbl(B,\mu)}} \bigwedge X^{'*}\cup\{\mu\}).$$

A formula $\psi$ is a logical consequence of the revision result $B \revb{\csir} \mu$ if it is
a logical consequence of each set $X'$ in $\Vcalbl(B, \mu)$ augmented with the new information $\mu$. 
Formally, we have

\begin{center}
$B\revb{\csir}\mu \models \psi$ if and only if for each $X' \in \Vcalbl(B, \mu), X' \cup \{\mu\} \models \psi.$
\end{center} 

As far, we can conclude also that in the case of the CSIR operator,
an interpretation $I$ is a model of the revised belief base (\ie $I \models B\revb{}\mu$) 
if and only if $I$ satisfies $\mu$ and satisfies 
every formula occurring in each maximal (\wrt credibility) consistent set derived from 
all subbases intersections (\ie  in each set of $\Vcalbl(B, \mu)$).

Finally, it is worth mentioning that if $B'_i \cap B'_j =\emptyset$ for all $B'_i, B'_j \in \Wcal(B,\mu)$, 
thus $\Foci(B,\mu)=\emptyset$ and therefore $B \revb{\csir}\mu= \mu$.
This obvious result can be explained due to the conflicting character of each pair of subbases. 
Contrary to the drastic approach (Widtio, RSRW and CSRW operators),
this case constitutes the unique situation where the compromise strategy (and particularly the CSIR operator)
loses all the original beliefs. That is why, it would be also interesting to study other compromise operators 
complying with cardinality and set inclusion criteria.

\subsection{Illustration of CSIR operator} 
We show, in the following example, how compromise strategy and particularly the CSIR operator can be attractive
compared to the drastic strategy.

\begin{example} \label{exp:exp3} 
Let $B$ a belief base such that 
$B=\{a \rightarrow \neg b, c \rightarrow \neg a, \neg d \rightarrow \neg a,  b \rightarrow c,
b,  a \rightarrow \neg d,  d \rightarrow e,  c \rightarrow e,  b \rightarrow d,  d \rightarrow c\}$
and a new information $\mu= a \wedge \neg e$.

\medskip
We have 
\begin{eqnarray*}
\Wcal(B,\mu) & = & \{B'_1=\{a \rightarrow \neg b,  c \rightarrow \neg a, \neg d \rightarrow \neg a,  b \rightarrow c \};
\\  \ &  &  B'_2=\{b,  a \rightarrow \neg d, d \rightarrow e,  c \rightarrow e \}; 
 \\  \ &  & B'_3=\{ a \rightarrow \neg b,  b \rightarrow d,  d \rightarrow c \}
 \\  \ &  & B'_4=\{b, b \rightarrow d,  d \rightarrow c\}\}
\end{eqnarray*}
and 
\begin{eqnarray*}
\Wcalc(B,\mu) & = & \{B'_1=\{ a \rightarrow \neg b,  c \rightarrow \neg a, \neg d \rightarrow \neg a, b \rightarrow c\};
\\  \ &  &  B'_2=\{b, a \rightarrow \neg d, d \rightarrow e,  c \rightarrow e \}\}.
\end{eqnarray*}

Let us compute $\Wcalbl(B,\mu)$ and $\Vcalbl(B,\mu)$.
It is obvious that
$$\Foci(B,\mu)   =  \{X'_1= \{  a \rightarrow \neg b\}, X'_2=\{b\}, X'_3=\{b \rightarrow d,  d \rightarrow c\}\}.$$
The BBAs related to all the subbases are at first determined as follows: 

\begin{equation*}   
      \left\{
      \begin{array}{l}
      m_1(B'_1)=4/10 \\
      m_1(B)=6/10 \\
      \end{array}
      \right.
\end{equation*}
\begin{equation*}   
      \left\{
      \begin{array}{l}
      m_2(B'_2)=4/10 \\
      m_2(B)=6/10 \\
      \end{array}
      \right.
\end{equation*}
\begin{equation*}   
      \left\{
      \begin{array}{l}
      m_3(B'_3)=3/10 \\
      m_3(B)=7/10 \\
      \end{array}
      \right.
\end{equation*}
\begin{equation*}   
      \left\{
      \begin{array}{l}
      m_4(B'_4)=3/10 \\
      m_4(B)=7/10 \\
      \end{array}
      \right.
\end{equation*}

Dempster's rule is then used to combine these BBAs. The resulting BBA is the following:    
\begin{equation*}   
      \left\{
      \begin{array}{l}
m(B'_1)	=	0.169	\\
m(B'_2)	=	0.169	\\
m(B'_3)	=	0.1086	\\
m(B'_4)	=	0.1086	\\
m(X'_1)	=	0.0724	\\
m(X'_2)	=	0.0724	\\
m(X'_3)	=	0.0466	\\
m(B)	=	0.2534  \\  
      
      \end{array}
      \right.
\end{equation*}

Finally, the credibility degrees of the maximal credible consistent subbases and
the credibility degrees of their possible intersections are 
computed based on the combined BBA as below. 
\begin{equation*}   
      \left\{
      \begin{array}{l}
Bel(B'_1)	= m(B'_1) + m(X'_1)= 0.2414\\
Bel(B'_2)	=m(B'_2) + m(X'_2)= 0.2414\\
Bel(B'_3)	= m(B'_3) + m(X'_1) + m(X'_3)= 0.2276\\
Bel(B'_4)	=m(B'_4) + m(X'_2) + m(X'_3)= 0.2276\\
Bel(X'_1)	=m(X'_1)	=	0.0724	\\
Bel(X'_2)	= m(X'_2)	=	0.0724	\\
Bel(X'_3)	= m(X'_3)	=	0.0466	\\      
\end{array}
      \right.
\end{equation*}

This let us to have
\begin{eqnarray*}
\Wcalbl(B,\mu)  & = & \{B'_1=\{ a \rightarrow \neg b, c \rightarrow \neg a, \neg d \rightarrow \neg a,  b \rightarrow c\};
\\  \ &  &  B'_2=\{b, a \rightarrow \neg d,  d \rightarrow e, c \rightarrow e \} \}.
\end{eqnarray*}

Once again, remark that although the set $X'_3$ has the highest cardinality in comparison with $X'_1$ and $X'_2$,
the above computation yields to have
\begin{eqnarray*}
\Vcalbl(B,\mu)  & = & \{X'_1= \{ a \rightarrow \neg b\}, X'_2=\{b\}\}.
\end{eqnarray*}

Turn us now to compute revision result with different operators.
Observe that 
$$\bigcap_{B'_i \in \Wcal(B,\mu)} B'_i=\bigcap_{B'_i \in \Wcalc(B,\mu)} B'_i=\bigcap_{B'_i \in \Wcalbl(B,\mu)} B'_i=\emptyset$$
and so that we lose all initial beliefs taking into account the drastic strategy.
Formally, we obtain
$B \revb{\widtio}\mu = B \revb{\rsrw}\mu = B \revb{\csrw}\mu = \mu$
and trivially
$B \crevb{\widtio}\mu = B \crevb{\rsrw}\mu = B \crevb{\csrw}\mu = Cn(\mu).$
Nonetheless, the compromise strategy avoids falling down in a similar result.
$$B \revb{\csir}\mu  =  (( a \rightarrow \neg b) \vee b) \wedge \mu.$$
This yields automatically 
$B \crevb{\csir}\mu  =  Cn((( a \rightarrow \neg b) \vee b) \wedge \mu).$
It can be clearly seen that CSIR operator behaves in a careful way as expected.
\end{example}

%

\section{Conclusion}
\label{sec:Conclusion}
This paper contributes to the current line of research in belief change that 
has received considerable attention from the AI, database and philosophy communities.
In these contexts, revision is considered as the well-known belief change operation 
remaining consistency in order to integrate the new
information while modifying the initial beliefs as little as possible.
As far as we know,  revision strategies in evidence theory have seldom been addressed. 
As shown previously, our work deals with the use of evidence theory in belief base revision that was investigated through
two extreme revision operators (CSRG and CSRW respectively similar to RSRG and RSRW)
and a new compromise revision strategy explicitly instantiated by CSIR operator.
In both contributions, we highlighted the potential benefit offered by the different tools of evidence theory
in belief base revision. Indeed, the notion of BBA was used to model the information related to each subbase. 
Dempster's rule was also applied to combine all these BBAs in order to yield a global BBA synthetizing 
the information given by the subbases set. 
In addition, the credibility maximality was used as a selection criterion to choose the most
credible subbases (CSRG and CSRW) or credible subbases intersections (CSIR) instead of cardinality criterion. 
The presented examples illustrated to which extend the credibility criterion is interesting to avoid losing valuable 
and relavant beliefs.  
Let us mention that we can define systematically two other compromise operators 
in the same spirit of CSIR operator:
the former RSIR (Removed sets intersections revision) is an intermediary between RSRG and RSRW operators
respecting the cardinality as maximality criterion  
and the latter SIR is an intermediary between Ginsberg and Widtio operators respecting 
the set inclusion as maximality criterion. 

Belief revision and belief contraction are two sides of a same coin. In fact, 
unlike belief revision, which allows to incorporate new information into a set of beliefs, 
belief contraction is the process of rationally removing a given belief from a belief set. 
It ensures removing from the set what is necessary to no longer imply this information
while respecting the principle of minimality of change.
Belief contraction is related to belief revision in the sense that belief revision
can be defined in term of contraction, 
that is to say, revising a belief set by new information amounts
to first remove from the belief set any belief contradicting the new information,
and then to add the new information.
Formally, from the works of Levi \cite{Levi77,Levi80} and Harper \cite{Harper77},
the correspondence between belief revision and belief contraction
has been established in \cite{AGM85}, thus providing a useful equivalence
for belief change studies. Since then, different concepts and constructions
have undergone significant elaboration and development (\cite{Gardenfors88,GardenforsMakinson88,Rott93}).
Different formula-based contraction operators have been studied in the literature
that are classified into two families :
the first one includes contraction operators that have been defined in terms of remainder sets,
i.e., maximal subsets of formulas that fail to imply a given formula.
From a dual point of view, the second family represents the Kernel contraction \cite{Hansson94} which
is based on the minimal sub-theories implying the formula by which one contracts. 
Therefore, natural extension of this work is to study the equivalence between formula-based revision operators 
and their corresponding in the context of belief contraction (such as transitively relational 
partial meet contraction, full meet contraction, maxichoice contraction \cite{AGM85} and infra contraction
\cite{BoothMVW11}).

Besides, the work of Creignou et al. \cite{CreignouKP17} defines PRSRG and PRSRW operators
as the extension respectively of RSRG  and  RSRW operators to stratified belief bases.
We remind that a stratified belief base $B=(S_1, ..., S_n)$ is provided by a partition of the belief
base in strata $S_i$ ($1 \leq i \leq n$) representing priorities between formulas. 
It seems so natural to think in the future about the extension of our operators (CRSG, CSRW and CSIR) to
stratified belief bases.
Finally, future work can include also a thorough investigation of the complexity of these new revision operators in
the general case of propositional logic and in some fragments (particularly Horn and Krom fragments).



\bigskip

\bibliographystyle{splncs03}
\bibliography{ontheuse}

\end{document}